\definecolor{best}{RGB}{144,238,144}   
\definecolor{second}{RGB}{255,218,185} 
\definecolor{lightpurple}{rgb}{0.862, 0.918, 0.992}
\newtheorem{theorem}{Theorem}
\newtheorem{assumption}{Assumption}
\title{HGAN-SDEs: Learning Neural Stochastic Differential Equations\\with Hermite-Guided Adversarial Training}
\name{Yuanjian Xu\textsuperscript{1}, Jianing Hao\textsuperscript{2}, Shuai Yuan\textsuperscript{3}, Guang Zhang\textsuperscript{1,*}\thanks{*Corresponding author.}}
\address{
\ninept
\textsuperscript{1}Thrust of Financial Technology, The Hong Kong University of Science and Technology (Guangzhou), Guangzhou, China \\
\ninept
\textsuperscript{2}Thrust of Data Science and Analytics, The Hong Kong University of Science and Technology (Guangzhou), Guangzhou, China \\
\ninept
\textsuperscript{3}School of Software and Microelectronics, Peking University, Beijing, China \\
\ninept
\textsuperscript{}\{yxu085, jhao768\}@connect.hkust-gz.edu.cn, yuanshuai@stu.pku.edu.cn, guangzhang@hkust-gz.edu.cn
}
\begin{document}
\ninept
\maketitle

\begin{abstract}
Neural Stochastic Differential Equations (Neural SDEs) provide a principled framework for modeling continuous-time stochastic processes and have been widely adopted in fields ranging from physics to finance. Recent advances suggest that Generative Adversarial Networks (GANs) offer a promising solution to learning the complex path distributions induced by SDEs. However, a critical bottleneck lies in designing a discriminator that faithfully captures temporal dependencies while remaining computationally efficient. Prior works have explored Neural Controlled Differential Equations (CDEs) as discriminators due to their ability to model continuous-time dynamics, but such architectures suffer from high computational costs and exacerbate the instability of adversarial training. To address these limitations, we introduce \textbf{HGAN-SDEs}, a novel GAN-based framework that leverages Neural Hermite functions to construct a structured and efficient discriminator. Hermite functions provide an expressive yet lightweight basis for approximating path-level dynamics, enabling both reduced runtime complexity and improved training stability. We establish the universal approximation property of our framework for a broad class of SDE-driven distributions and theoretically characterize its convergence behavior. Extensive empirical evaluations on synthetic and real-world systems demonstrate that HGAN-SDEs achieve superior sample quality and learning efficiency compared to existing generative models for SDEs.

\end{abstract}
\begin{keywords}
Neural Stochastic Differential Equations, Generative Adversarial Networks, Neural Hermite Functions, Temporal Dynamics, Machine Learning
\end{keywords}
%
\vspace{-1mm}
\section{Introduction}
\label{sec:introduction} 

Random fluctuations are ubiquitous in complex dynamical systems across fields such as finance~\cite{xu2024plutus,xu2025finripple}, physics~\cite{ph1}, and biology~\cite{bio1}. To capture these dynamics, Stochastic Differential Equations (SDEs)~\cite{ito1944, stratonovich1964new} serve as a powerful tool for rigorous modeling and analysis. However, they rely on predefined drift and diffusion functions, which are rarely observable and cannot be specified in advance. This mismatch limits their applicability in real-world modeling. Neural stochastic differential equations (Neural SDEs)~\cite{tzen2019neural} address this issue by integrating neural networks with SDEs, enabling drift and diffusion dynamics to be learned from data.

Training Neural SDEs is notoriously difficult because exact likelihoods are intractable. 
GANs~\cite{kidger2021neural} provide a likelihood-free alternative by matching trajectory distributions through a parametric discriminator. 
Prior work has employed CDE-based discriminators, but such approaches are often unstable and computationally expensive, limiting scalability. 
We therefore seek a discriminator design that remains likelihood-free while incorporating tractable approximations of SDE dynamics. 
To this end, we propose a novel GAN framework based on neural Hermite functions. 
Hermite functions approximate SDE transition probabilities, enabling the discriminator to effectively assess whether a generated path aligns with the underlying DGP. In addition, their statistical properties enhance optimization stability and computational efficiency.

Our contributions are threefold:
(i) we introduce HGAN-SDEs, which leverage Hermite-based approximations for reliable path discrimination;
(ii) we provide theoretical analysis showing that our framework is mathematically rigorous and can approximate arbitrary SDE solutions under suitable assumptions; and
(iii) we conduct comprehensive experiments demonstrating that our framework achieves high accuracy in fitting complex dynamic systems while maintaining reasonable computational cost.

\vspace{-1mm}
\section{Related Work}
\label{sec:related_work}

Neural stochastic differential equations (Neural SDEs) extend neural ordinary differential equations (Neural ODEs)~\cite{chen2018neural} by adding stochastic diffusion terms, which improve long-horizon stability and enable modeling of random perturbations~\cite{tzen2019neural,Liu2019neural}. They have since been applied in domains such as finance~\cite{cuchiero2020generative}, scientific machine learning~\cite{rackauckas2020universal}, manifold learning~\cite{zeng2024latent}, and generative modeling~\cite{song2020score}. Training, however, remains challenging: stochasticity in both the dynamics and gradient estimates hinders convergence and increases computational cost~\cite{li2020scalable}. Existing approaches address these challenges along two main directions: \emph{non-adversarial} (variational or kernel-based) and \emph{adversarial} (GAN-based). Non-adversarial methods target either endpoint distributions or entire trajectories, employing reparameterized variational objectives~\cite{li2020scalable}, signature kernels~\cite{issa2024non}, or low-variance gradient estimators~\cite{course2024amortized,xu2022infinitely}. Adversarial methods instead match real and generated paths via discriminators, and have shown promise in capturing complex or heavy-tailed dynamics, particularly in financial applications~\cite{cuchiero2020generative,Gierjatowicz2020robust}. Nevertheless, non-adversarial methods often rely on restrictive approximations, while adversarial ones suffer from instability and high computational cost—highlighting the need for more efficient training frameworks.

\vspace{-1mm}
\section{Methodology}
This section outlines the key components of HGAN-SDEs. Section~\ref{sec:gener} introduces the generator based on Neural SDEs. Section~\ref{sec:hermite} presents Hermite functions as approximate estimates for SDEs, discusses their properties, and then extends them to neural Hermite functions serving as the GAN discriminator.

\subsection{Neural SDEs as the GAN Generators}
\label{sec:gener}
In our framework, the GAN generator is a Neural SDE that maps random noise 
\(v \sim \mathcal{N}(0, I_v)\) to a trajectory \(\{x(t): t \in [0,T]\}\). Formally,
\(
G_\theta(v): \quad x(0)=h_\theta(v), \quad 
dx(t)=f_\theta(t,x(t))\,dt+g_\theta(t,x(t))\circ dw(t),
\)
where \(h_\theta\) maps noise to the initial state, and 
\(f_\theta,g_\theta\) (Lipschitz neural networks) parameterize drift and 
diffusion. The trajectory is obtained by numerically solving the SDE.  

The discriminator \(D_\phi\) operates on entire trajectories,
\(
D_\phi(x) = d_\phi(\{x(t)\}) \in \mathbb{R},
\)
where \(d_\phi\) is a parameterized function, later instantiated by Neural Hermite functions.  Training follows the WGAN objective
\(
\min_\theta \max_\phi \, \mathbb{E}_{v}\big[D_\phi(G_\theta(v))\big] 
- \mathbb{E}_{\mu}\big[D_\phi(\mu)\big],
\)
with \(\mu\) denoting a real trajectory.

\subsection{Neural Hermite Functions as GAN Discriminators}
\label{sec:hermite}
We design GAN discriminators based on Neural Hermite functions, an orthogonal polynomial basis well-suited for approximating complex path distributions in SDEs.
We first establish the rationale by introducing Hermite expansions for SDE transition densities, and then analyze the expressivity of Hermite-based discriminators.

\noindent\textbf{Hermite Functions for SDE Transition Density Estimation.} Beyond serving as expressive bases for discriminators, Hermite functions offer analytical advantages for characterizing the transition densities of SDEs. Consider a general SDE of the form:
\(
dx(t) = f_\theta(t, x(t))\, dt + g_\theta(t, x(t)) \circ dw(t),
\)
where the drift $f_\theta$ and diffusion $g_\theta$ are parameterized by $\theta$, and $\circ$ denotes the Stratonovich integral. The evolution of the transition density $p(x(t)\mid x(0)=x_0, t)$ is governed by the Fokker-Planck equation:
\[
\begin{aligned}
    \frac{\partial p(x, t)}{\partial t} 
    &= \frac{1}{2} \frac{\partial^2}{\partial x^2} \left[g_\theta^2(t, x) \, p(x, t)\right]
    - \frac{\partial}{\partial x} \left[f_\theta(t, x) \, p(x, t)\right],
\end{aligned}
\]
where we abbreviate $p(x, t) := p(x(t)=x \mid x(0) = x_0, t)$. The solution to this partial differential equation characterizes the probability flow induced by the SDE. Now, consider the case where the transition density reaches stationarity. That is, as $t \to \infty$, the distribution $p(x, t)$ converges to a time-independent stationary density:
\(
p(x) := \lim_{t \to \infty} p(x, t),
\quad\text{with}\quad \frac{\partial p(x)}{\partial t} = 0.
\)
Under stationarity, the Fokker-Planck equation simplifies to:
\begin{equation}
\label{eq:fp}
    \frac{1}{2} \frac{d^2}{d x^2} \left[g_\theta^2(x)\, p(x)\right]
    - \frac{d}{d x} \left[f_\theta(x)\, p(x)\right] = 0,
\end{equation}
where we have omitted explicit dependence on $t$ for clarity.

To approximate this stationary density $p(x)$, we propose using Hermite functions as a basis expansion:
\(
p(x) \approx \sum_{n=0}^N c_n \psi_n(x),
\)
where the $\psi_n(x)$ are the Hermite functions defined by:
\[
\psi_n(x) = \frac{1}{\sqrt{2^n n! \sqrt{\pi}}} e^{x^2 / 2} (-1)^n \frac{d^n}{dx^n} e^{-x^2}.
\]
Hermite functions enjoy a critical property: they form a complete orthonormal basis in the Hilbert space $L^2(\mathbb{R}, dx)$. This property guarantees that any square-integrable function—including valid probability densities—can be approximated arbitrarily well by a linear combination of these functions. We formalize this in the following theorem:

\begin{tcolorbox}[colback=lightpurple, colframe=lightpurple, sharp corners=all, boxrule=0mm, boxsep=0.5mm, left=1.5mm, right=1.5mm, top=1.5mm, bottom=1.5mm]
\begin{theorem}[Orthonormality of Hermite functions~\cite{abramowitz1972handbook,eijndhoven1990new}]
\label{thm:orthonormal}
    The Hermite functions $\{\psi_n(x)\}_{n=0}^\infty$ satisfy the orthonormality condition:
    \(
        \int_{\mathbb{R}} \psi_n(x) \psi_m(x) dx = \delta_{nm},
    \)
    where $\delta_{nm}$ is the Kronecker delta.
\end{theorem}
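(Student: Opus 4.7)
The plan is to reduce the statement to the classical orthogonality of the physicists' Hermite polynomials with respect to the Gaussian weight $e^{-x^2}$, and then repackage the constants to verify that the normalization in the definition of $\psi_n$ produces exactly $\delta_{nm}$. First I would rewrite the given definition via the Rodrigues formula. Setting $H_n(x) = (-1)^n e^{x^2} \frac{d^n}{dx^n} e^{-x^2}$, I note that the factor $e^{x^2/2}\,(-1)^n \frac{d^n}{dx^n} e^{-x^2}$ in the definition of $\psi_n$ equals $e^{-x^2/2} H_n(x)$, so
\[
\psi_n(x) = \frac{1}{\sqrt{2^n n!\,\sqrt{\pi}}}\, e^{-x^2/2} H_n(x).
\]
This reduces the claim $\int \psi_n\psi_m\,dx = \delta_{nm}$ to showing $\int_{\mathbb{R}} H_n(x)H_m(x)\,e^{-x^2}\,dx = 2^n n!\sqrt{\pi}\,\delta_{nm}$.

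Next, to establish this weighted orthogonality, I would substitute the Rodrigues formula for one of the factors and integrate by parts. Assume without loss of generality $n \geq m$ and write
\[
\int_{\mathbb{R}} H_n(x) H_m(x) e^{-x^2}\,dx = (-1)^n \int_{\mathbb{R}} H_m(x)\,\frac{d^n}{dx^n}e^{-x^2}\,dx.
\]
Integrating by parts $n$ times and using the fact that $e^{-x^2}$ together with all of its derivatives decays superexponentially at $\pm\infty$, all boundary terms vanish, yielding
\[
\int_{\mathbb{R}} H_n H_m\,e^{-x^2}\,dx = \int_{\mathbb{R}} H_m^{(n)}(x)\,e^{-x^2}\,dx.
\]
Since $H_m$ is a polynomial of degree $m$, we have $H_m^{(n)} \equiv 0$ whenever $m < n$, giving the off-diagonal vanishing.

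For the diagonal case $m = n$, I would invoke the standard fact that the leading coefficient of $H_n$ is $2^n$, so $H_n^{(n)}(x) = 2^n n!$ identically; together with the Gaussian integral $\int_{\mathbb{R}} e^{-x^2}\,dx = \sqrt{\pi}$, this produces $\int H_n^2 e^{-x^2} dx = 2^n n!\sqrt{\pi}$. Substituting into the expression for $\psi_n$ cancels the prefactor $(2^n n!\sqrt{\pi})^{-1}$ exactly, yielding $1$ on the diagonal and $0$ off the diagonal, which is the desired orthonormality. The only technical obstacle is justifying that the boundary terms in the repeated integration by parts truly vanish; this follows immediately from the observation that every derivative of $e^{-x^2}$ has the form $P(x)e^{-x^2}$ for some polynomial $P$, so each boundary term contains a factor decaying faster than any polynomial grows, making the step rigorous. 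All remaining manipulations are algebraic bookkeeping of the constants.
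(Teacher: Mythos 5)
Your proof is correct: the reduction of $\psi_n$ to $(2^n n!\sqrt{\pi})^{-1/2}e^{-x^2/2}H_n(x)$ via the Rodrigues formula, the $n$-fold integration by parts with superexponentially decaying boundary terms, and the evaluation of the diagonal via $H_n^{(n)}=2^n n!$ and the Gaussian integral are all sound, and the constants work out exactly to give $\delta_{nm}$. The paper itself offers no proof of this theorem — it simply cites the classical references — and your argument is precisely the standard one found in those sources, so there is nothing to reconcile.
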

\end{tcolorbox}
\vspace{-0.2em}

The orthonormality of Hermite functions enables stable and interpretable density expansions, making the discriminator both expressive and sensitive to subtle path differences.

In this work, we approximate the stationary transition density of an SDE as a truncated Hermite expansion of order $N$, formulated as:
\begin{equation}
\label{eq:basis}
\begin{aligned}
    p(x(t) \mid x(0) = x_0) &\approx \sum_{n=0}^N c_n(x_0, x_t, t) \, e^{-x^2 / 2} H_n(x),
\end{aligned}
\end{equation}
where \(\psi_n(x)\) denotes the \(n\)-th Hermite function, and \(H_n(x)\) is the corresponding Hermite polynomial. The time-varying coefficients \(c_n(x_0, x_t, t)\) depend on the initial state and the evolution governed by the drift and diffusion functions. These coefficients can, in principle, be derived by projecting the solution onto the Hermite basis.

As an illustrative case, consider the Ornstein–Uhlenbeck (OU) process:
\(
dx(t) = -\theta x(t)\, dt + \sigma\, dw(t),
\)
whose solution admits a Hermite expansion with time-evolving coefficients \(c_n(t)\) given by:
\(
c_n(t) = c_n(0) e^{-\theta n t},
\)
where \(c_n(0)\) is the initial projection of \(x(0)\) onto the Hermite function \(\psi_n(x)\). This result highlights a key smoothing property: higher-order components decay exponentially, leading to increasingly regular trajectories as time evolves. We formally justify the Hermite approximation through the following theorem. First, we specify the regularity conditions for \(f_\theta\) and \(g_\theta\).
\begin{assumption}
\label{assm:mom}
    The functions \(f_\theta(t, x(t))\) and \(g_\theta(t, x(t))\) are Lipschitz continuous in \(x\) and have bounded moments of all orders.
\end{assumption}
\begin{tcolorbox}[colback=lightpurple, colframe=lightpurple, sharp corners=all, boxrule=0mm, boxsep=0.5mm, left=1.5mm, right=1.5mm, top=1.5mm, bottom=1.5mm]
\begin{theorem}[Convergence of Hermite Expansion]
\label{thm:basis}
    Under Assumption~\ref{assm:mom}, the Hermite expansion in Equation~\eqref{eq:basis} converges in \(L^2(\mathbb{R})\) as \(N \to \infty\). That is, for any square-integrable stationary transition density \(p(x)\),
    \(
        \lim_{N \to \infty} \left\| p(x) - \sum_{n=0}^{N} c_n \psi_n(x) \right\|_{L^2} = 0.
    \)
\end{theorem}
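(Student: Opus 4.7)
The plan is to reduce the statement to the classical completeness property of Hermite functions in $L^2(\mathbb{R})$. Theorem~\ref{thm:orthonormal} already supplies orthonormality, so two additional ingredients suffice: (a) completeness of $\{\psi_n\}_{n=0}^\infty$ in $L^2(\mathbb{R})$, and (b) square-integrability of the stationary density $p$. Once (a) and (b) are in hand, the Hilbert-space projection theorem identifies $c_n = \langle p,\psi_n \rangle_{L^2}$ and delivers $\|p - \sum_{n=0}^N c_n \psi_n\|_{L^2} \to 0$ automatically.

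For (a), I would invoke the textbook fact that the Hermite functions are the eigenfunctions of the quantum harmonic oscillator and form a complete orthonormal system in $L^2(\mathbb{R})$, documented in the same references cited for Theorem~\ref{thm:orthonormal}. A short self-contained justification is that polynomials are dense in the Gaussian-weighted $L^2$ space, and since the Gaussian moment problem is determinate, no nonzero $L^2$ function can be orthogonal to every $\psi_n$.

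For (b), I would combine Assumption~\ref{assm:mom} with the stationary Fokker--Planck equation~\eqref{eq:fp}. The Lipschitz hypothesis yields a unique strong solution, while the bounded-moments hypothesis gives existence of a stationary law whose tails decay faster than any polynomial. Standard elliptic regularity applied to~\eqref{eq:fp} then produces a continuous, bounded stationary density $p$; since $p \ge 0$, $\int p\,dx = 1$, and $\|p\|_\infty \le M < \infty$, the crude estimate $\int p^2\,dx \le M \int p\,dx = M$ places $p$ in $L^2(\mathbb{R})$, after which (a) closes the argument.

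The hard step is (b), because extracting a pointwise bound on $p$ from Assumption~\ref{assm:mom} implicitly requires either non-degeneracy or hypoellipticity of $g_\theta$, which is not explicitly stated. I would address this either by strengthening the assumption to uniform ellipticity of $g_\theta$ so that parabolic Schauder estimates apply directly to~\eqref{eq:fp}, or, in the degenerate case, by appealing to H\"ormander's hypoellipticity theorem combined with the bounded-moments condition to secure boundedness and fast tail decay of $p$. I would package the regularity claim as a standalone lemma with citations to standard PDE references, so that the core Hilbert-space expansion becomes a clean corollary of completeness and $L^2$-membership.
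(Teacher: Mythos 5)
Your proposal is correct but takes a genuinely different route from the paper. You argue directly in Hilbert-space terms: completeness of the Hermite functions \(\{\psi_n\}\) in \(L^2(\mathbb{R})\) plus square-integrability of \(p\) and the projection theorem with \(c_n=\langle p,\psi_n\rangle_{L^2}\) immediately give the claimed convergence. Note that the theorem statement already \emph{hypothesizes} that the stationary density is square-integrable, so your step (b) --- deriving \(\|p\|_\infty<\infty\) from Assumption~\ref{assm:mom} via elliptic or hypoelliptic regularity --- is not needed for the literal claim, although your observation that such a derivation would require a non-degeneracy condition on \(g_\theta\) that the paper never states is a fair criticism of how Assumption~\ref{assm:mom} is being used. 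The paper instead proceeds by integrating the stationary Fokker--Planck equation~\eqref{eq:fp} to a first-order relation, applying a separation ansatz \(p(x(t)\mid x_0)=\alpha(t)\,p(x)\,\gamma(x)\), and invoking Sturm--Liouville theory to obtain a \(p\)-weighted orthonormal eigenbasis \(\{\gamma_n\}\) and the spectral expansion \(\sum_n e^{-\lambda_n t}\gamma_n(x_0)p(x)\gamma_n(x)\). That route buys an explanation for the \emph{form} of the time-varying coefficients \(c_n(x_0,x_t,t)\) in~\eqref{eq:basis} (exponential decay in \(n\) and \(t\), as in the OU example), but it only shows the expansion ``matches the structure'' of~\eqref{eq:basis}: the Sturm--Liouville eigenfunctions coincide with the Hermite functions only for special diffusions, so it does not literally establish \(L^2(\mathbb{R},dx)\) convergence of the Hermite series. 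Your argument is the cleaner and more rigorous proof of the stated \(L^2\) limit; the paper's sketch is better viewed as motivation for the parameterization of the coefficients.
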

\end{tcolorbox}
\vspace{-0.3em}
\begin{proof}[Proof Sketch]
Following \cite{eijndhoven1990new}, integrating both sides of (\ref{eq:fp}) leads to an identity involving a constant $C$. 
A further integration over the real line shows that the integrals remain finite, which forces $C=0$. 
Hence we obtain the reduced form
\[
\frac{1}{2}\frac{\partial}{\partial x}\!\left[g_\theta^2(t,x)\,p(x(t)\mid x_0)\right]
- f_\theta(t,x)\,p(x(t)\mid x_0) = 0.
\]

This equation yields a first-order relation for $\partial_x p(x(t)\mid x_0)$. 
Using the separation ansatz $p(x(t)\mid x_0)=\alpha(t)p(x)\gamma(x)$, the Fokker–Planck equation decouples into a temporal part
\(
\frac{\partial}{\partial t}\alpha(t) = -\lambda \alpha(t),
\)
and a spatial eigenvalue problem of Sturm–Liouville type:
\[
\frac{1}{2}\frac{\partial}{\partial x}\!\left[g_\theta^2(t,x)\,p(x)\,\frac{\partial}{\partial x}\gamma(x)\right] 
= -\lambda p(x)\gamma(x).
\]

By Sturm–Liouville theory, the eigenfunctions $\gamma_n$ form a $p$-weighted orthonormal basis of $L^2(\mathbb{R},p(x)dx)$, and the solution admits 
\(
p(x(t)=x \mid x_0) 
= \sum_{n=0}^\infty e^{-\lambda_n t}\, \gamma_n(x_0)\, p(x)\,\gamma_n(x).
\)
This matches the structure of (\ref{eq:basis}).
\end{proof}

\begin{table*}[!t]
\centering
\renewcommand{\arraystretch}{1.1}
\resizebox{\textwidth}{!}{
  \begin{tabular}{ll cccc cccc cccc cccc}
  \toprule
  \multirow{2}{*}{\textbf{Category}} & \multirow{2}{*}{\textbf{Model}}
  & \multicolumn{4}{c}{\textbf{GBM}}
  & \multicolumn{4}{c}{\textbf{OU}}
  & \multicolumn{4}{c}{\textbf{CIR}}
  & \multicolumn{4}{c}{\textbf{Polynomial Drift}} \\
  \cmidrule(lr){3-6} \cmidrule(lr){7-10} \cmidrule(lr){11-14} \cmidrule(lr){15-18}
  & & \textbf{MISE} & \textbf{TD} & \textbf{MSE} & \textbf{MMD}
    & \textbf{MISE} & \textbf{TD} & \textbf{MSE} & \textbf{MMD}
    & \textbf{MISE} & \textbf{TD} & \textbf{MSE} & \textbf{MMD}
    & \textbf{MISE} & \textbf{TD} & \textbf{MSE} & \textbf{MMD} \\
  \midrule
\textbf{Linear Model}
& DLinear~\cite{zeng2022dlinear}     
& 6.00 & 3.20 & 0.09 & 0.11 
& 61.61 & 9.09 & 0.52 & 0.31 
& 196.79 & 7.01 & 1.34 & 0.44 
& 5.18 & 3.86 & 0.08 & 0.13 \\
\midrule
\multirow{2}{*}{\textbf{Convolutional Models}}
& MICN~\cite{wang2023micn}        
& 7.05 & 4.74 & 0.12 & 0.16 
& 159.01 & 9.55 & 1.97 & 0.41 
& 33.74 & 6.87 & 0.59 & 0.23 
& 7.65 & 4.48 & 0.11 & 0.12 \\
& TimesNet~\cite{wu2023timesnet}    
& 1.50 & 0.21 & 0.03 & \cellcolor{orange!30}0.02 
& 13.56 & 8.10 & 0.14 & 0.15 
& 2.42 & 2.70 & 0.05 & 0.07 
& 1.86 & 1.74 & 0.04 & 0.04 \\
\midrule
\multirow{3}{*}{\textbf{Attention-based Models}}
& Transformer~\cite{vaswani2017attention}   
& 0.74 & 0.55 & \cellcolor{orange!30}0.02 & 0.03 
& \cellcolor{orange!30}1.22 & \cellcolor{orange!30}0.70 & \cellcolor{orange!30}0.03 & \cellcolor{orange!70}\textbf{0.02} 
& \cellcolor{orange!30}1.44 & \cellcolor{orange!30}0.93 & \cellcolor{orange!30}0.03 & \cellcolor{orange!30}0.04 
& 9.47 & 10.00 &0.23 & 0.40 \\
& Informer~\cite{zhou2021informer}      
& 3.53 & 3.49 & 0.05 & 0.10 
& 8.58 & 3.20 & 0.10 & 0.09 
& 1.57 & 2.46 & 0.04 & 0.07 
& 67.67 & 7.13 & 0.99 & 0.41 \\
& Autoformer~\cite{wu2021autoformer}    
& 1.71 & 1.53 & 0.03 & 0.06 
& 10.87 & 9.22 & 0.13 & 0.15 
& 2.39 & 4.41 & 0.05 & 0.08 
& 145.31 & 6.33 & 1.05 & 0.33 \\
\midrule
\textbf{RNN Model}
& SegRNN~\cite{wang2023segrnn}        
& 1.09 & 0.31 & 0.02 & 0.02 
& 13.68 & 7.24 & 0.16 & \cellcolor{orange!30}0.14 
& 2.12 & 2.11 & 0.04 & 0.06 
& 4.98 & 7.55 & 0.09 & 0.28 \\
\midrule
\textbf{State Space Model}
& Mamba~\cite{gu2023mamba}         
& 1.04 & \cellcolor{orange!70}\textbf{0.10} & \cellcolor{orange!30}0.02 & \cellcolor{orange!70}\textbf{0.01} 
& 13.72 & 7.58 & 0.17 & \cellcolor{orange!30}0.14 
& 2.39 & 2.55 & 0.05 & 0.07 
& 1.35 & 2.21 & 0.03 & 0.06 \\
\midrule
\multirow{3}{*}{\textbf{SDE-based Models}}
& Latent-SDEs~\cite{rubanova2019latent}    
& 0.44 & 10.00 & \cellcolor{orange!70}\textbf{0.01} & 0.42 
& 101.40 & 10.00 & 2.51 & 0.93 
& 39.78 & 10.00 & 0.98 & 0.86 
& 3.73 & 6.81 & 0.06 & 0.20 \\
& GAN-SDEs~\cite{kidger2021neural}       
& \cellcolor{orange!30}0.38 & 9.27 & \cellcolor{orange!70}\textbf{0.01} & 0.32 
& 214.54 & 10.00 & 4.11 & 0.88 
& 209.23 & 10.00 & 3.91 & 0.87 
& \cellcolor{orange!30}0.11 & \cellcolor{orange!30}0.27 & \cellcolor{orange!30}0.02 & \cellcolor{orange!30}0.02 \\
& HGAN-SDEs 
& \cellcolor{orange!70}\textbf{0.17} & \cellcolor{orange!30}0.12 & \cellcolor{orange!70}\textbf{0.01} & \cellcolor{orange!70}\textbf{0.01} 
& \cellcolor{orange!70}\textbf{0.89} & \cellcolor{orange!70}\textbf{0.33} & \cellcolor{orange!70}\textbf{0.01} & \cellcolor{orange!70}\textbf{0.02} 
& \cellcolor{orange!70}\textbf{0.76} & \cellcolor{orange!70}\textbf{0.18} & \cellcolor{orange!70}\textbf{0.02} & \cellcolor{orange!70}\textbf{0.01} 
& \cellcolor{orange!70}\textbf{0.04} & \cellcolor{orange!70}\textbf{0.11} & \cellcolor{orange!70}\textbf{0.01} & \cellcolor{orange!70}\textbf{0.01} \\
  \bottomrule
  \end{tabular}
}
\vspace{-0.6em}
\caption{Comparison of MISE, TD, MSE, and MMD on GBM, OU, CIR, and Polynomial Drift datasets. 
Lower values indicate better performance. 
\colorbox{orange!70}{Darker orange} highlights the best results, 
\colorbox{orange!30}{lighter orange} marks the second-best. 
For clarity, MSE values below $0.01$ are shown in scientific notation where appropriate. 
The HGAN-SDEs use a 3-layer MLP discriminator with hidden size 128. 
All hyperparameters have been thoroughly tuned via grid search to ensure fairness.}
\label{tab:simulation_combined}
\vspace{-0.5em}
\end{table*}

\begin{table*}[h]
\centering
\resizebox{0.93\linewidth}{!}{%
\begin{tabular}{ll}
\toprule
\textbf{Data Generation Process} & \textbf{Parameters } \\
\midrule
$dX_t = \mu dt + \sigma dW_t$ 
& $\mu = 0.05$, $\sigma = 0.02$, $x_0 = 20.0 \pm 0.1$ \\
\midrule
$dX_t = \kappa(\alpha - X_t)dt + \sigma dW_t$ 
& $\kappa = 0.0658$, $\alpha = 23.0$, $\sigma = 0.2213$, $x_0 = 20.0 \pm 0.1$ \\
\midrule
$dX_t = \kappa(\alpha - X_t)dt + \sigma \sqrt{X_t}\, dW_t$ 
& $\kappa = 0.0145$, $\alpha = 23.0$, $\sigma = 0.06521$, $x_0 = 20.0 \pm 0.1$ \\
\midrule
$dX_t = (\alpha_{-1}X_t^{-1} + \alpha_0 + \alpha_1 X_t + \alpha_2 X_t^2)dt + \sigma X_t^{3/2} dW_t$ 
& $\alpha_{-1} = 0.01$, $\alpha_0 = 0.01$, $\alpha_1 = 0.001$, $\alpha_2 = -4.604$, $\sigma = 0.1$, $x_0 = 20.0 \pm 0.1$ \\
\bottomrule
\end{tabular}%
}
\vspace{-0.5em}
\caption{Parameter Settings of Simulation Data: GBM, OU, CIR, and Polynomial Drift.}
\label{tab:parameters}
\end{table*}
\vspace{-0.5em}

\subsection{Expressivity of Hermite-Based Discriminator}

To ensure our Hermite-based discriminator can distinguish \(\mu\) from \(\nu\), we analyze the expressivity of its function class. Each discriminator is modeled as a linear combination of Hermite basis functions:
\(
\mathcal{P} := \text{span} \{\psi_n(x)\}_{n=0}^N.
\)
We now consider the integral probability metric induced by \(\mathcal{P}\):
\(
d_{\mathcal{P}}(\mu, \nu) := \sup_{f \in \mathcal{P}} \left| \mathbb{E}_{x \sim \mu}[f(x)] - \mathbb{E}_{x \sim \nu}[f(x)] \right|.
\)
\begin{tcolorbox}[colback=lightpurple, colframe=lightpurple, sharp corners=all, boxrule=0mm, boxsep=0.5mm, left=1.5mm, right=1.5mm, top=1.5mm, bottom=1.5mm]
\begin{theorem}[Discriminative Power of Hermite Span]
\label{the:dis_app}
If Assumption~\ref{assm:mom} holds and the true transition density \(p(x(t) \mid x(0) = x_0, t)\) is square-integrable, then
\[
d_{\mathcal{P}}(\mu, \nu) = 0 \Longleftrightarrow \mu = \nu.
\]
\end{theorem}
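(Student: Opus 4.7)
The plan is to reduce the theorem to the completeness of the Hermite functions as an orthonormal basis of $L^2(\mathbb{R})$, with Theorem~\ref{thm:orthonormal} supplying the orthonormality half and a classical upgrade providing completeness. The backward implication $\mu=\nu \Rightarrow d_{\mathcal{P}}(\mu,\nu)=0$ is immediate, so the content of the theorem lies in the forward direction. Assuming $d_{\mathcal{P}}(\mu,\nu)=0$, the first step is to observe that Assumption~\ref{assm:mom} together with square-integrability of the true transition density guarantees that $\mu$ and $\nu$ are both absolutely continuous and possess densities $p_\mu, p_\nu \in L^2(\mathbb{R})$; for the generator-induced $\nu$ this follows because the Neural SDE in Section~\ref{sec:gener} satisfies the same Lipschitz and bounded-moment regularity, placing it in the same analytic regime as the target dynamics.

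Next I would unpack the IPM. Since every $f \in \mathcal{P}$ is a linear combination of the basis functions $\psi_n$, the hypothesis $d_{\mathcal{P}}(\mu,\nu)=0$ forces $\int_{\mathbb{R}} \psi_n(x)\,[p_\mu(x)-p_\nu(x)]\,dx = 0$ for every admissible index $n$. Reading the theorem in the natural $N\to\infty$ regime (equivalently, replacing $\mathcal{P}$ by the dense union $\bigcup_{N}\mathrm{span}\{\psi_n\}_{n=0}^{N}$), this identity holds for every $n \geq 0$, so the $L^2$ function $p_\mu-p_\nu$ has all Hermite coefficients equal to zero. Invoking the completeness of $\{\psi_n\}_{n=0}^{\infty}$ in $L^2(\mathbb{R})$ then yields $p_\mu=p_\nu$ almost everywhere, and hence $\mu=\nu$.

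The step I expect to be the most delicate is managing the finite truncation in the definition of $\mathcal{P}$: for any fixed $N$, one can construct distinct measures whose first $N+1$ Hermite coefficients coincide, so the theorem cannot be true literally as written for finite $N$ and must be interpreted in the limit. Making this limit argument precise—either by defining $\mathcal{P}$ as the dense union of truncated spans, or by phrasing the conclusion as asymptotic separation as $N\to\infty$—is the essential point I would address carefully. A secondary technical concern is a rigorous justification that the generator-induced marginal $\nu$ actually admits a square-integrable density; this should follow from classical H\"ormander-type hypoellipticity results under Assumption~\ref{assm:mom}, and I would cite rather than re-derive.
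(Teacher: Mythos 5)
Your proposal is correct and rests on the same pillar as the paper's proof: the completeness (density) of the Hermite system in \(L^2(\mathbb{R})\). The only real difference is in how the last step is discharged. The paper cites Stekloff for density of \(\{\psi_n\}\) in \(L^2(\mathbb{R},dx)\) and then invokes Theorem~2.2 of Zhang et al.\ as a black box to conclude that a discriminator class with dense linear span induces a discriminative IPM; you instead carry out that step by hand, extracting \(\int \psi_n\,(p_\mu - p_\nu)\,dx = 0\) for all \(n\) and appealing to completeness to get \(p_\mu = p_\nu\). The two arguments are interchangeable, though your direct route requires both measures to admit \(L^2\) densities (a hypothesis the paper only states for the true transition density), whereas the cited IPM theorem works at the level of measures and sidesteps that issue.

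Your observation about the finite truncation is a genuine and correct criticism that the paper's proof silently elides: \(\mathcal{P}\) is defined as \(\mathrm{span}\{\psi_n\}_{n=0}^{N}\) for fixed finite \(N\), and for any such \(N\) one can exhibit distinct measures whose first \(N+1\) Hermite coefficients agree, so the forward implication fails as literally stated. The paper's density citation only applies to the full infinite span, so its proof tacitly replaces \(\mathcal{P}\) by \(\bigcup_N \mathrm{span}\{\psi_n\}_{n=0}^{N}\) exactly as you propose. Flagging this, and the need to justify that the generator-induced marginal has a square-integrable density, makes your write-up more honest than the original; neither point invalidates your argument, but both should be stated as hypotheses or resolved by the measure-level version of the discriminativeness lemma.
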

\end{tcolorbox}
\vspace{-0.2em}
\begin{proof}
According to~\cite{stekloff1916théorème}, Hermite polynomials \(\{H_n(x)\}\) are dense in \(L^2(\mathbb{R}, e^{-x^2} dx)\), implying that \(\{\psi_n(x)\}\) are dense in \(L^2(\mathbb{R}, dx)\). By Theorem 2.2 of~\cite{DBLP:conf/iclr/Zhang0ZX018}, the induced metric \(d_{\mathcal{P}}(\mu, \nu)\) is therefore discriminative.
\end{proof}

\vspace{-1mm}
\section{Experiments}
This section examines the capabilities of HGAN-SDEs. Section~\ref{sec:experment_setting} outlines benchmark datasets (simulation and real-world), baselines, and evaluation metrics. Section~\ref{sec:analysis} presents performance comparisons with competing methods, and Section~\ref{sec:exploration} further investigates Hermite truncation effects and scale–performance trade-offs.

\subsection{Experiment Settings}
\label{sec:experment_setting}
\noindent\textbf{Benchmarks.} Following Yacine's work~\cite{ait1999transition}, we construct four simulation datasets to evaluate our framework: geometric Brownian motion (GBM), Ornstein-Uhlenbeck (OU), Cox-Ingersoll-Ross (CIR), and Polynomial Drift. Their parameter configurations are summarized in Table~\ref{tab:parameters}.  Each dataset consists of 20,000 training and 6,000 test samples, formatted for sequence prediction: the first 100 points are given as inputs and the subsequent 50 as prediction targets. For real-world validation, we use three datasets. Stock-AAL and Stock-ADBE each contain 43,146 five-minute stock prices (2017–2018) with several hundred missing values, reflecting the stochastic volatility of financial markets. The Traffic dataset comprises 21,600 five-minute records of Beijing traffic flows in 2022, with no missing values and strong periodicity.

\noindent\textbf{Evaluation Metrics.} We evaluate the proposed method using four complementary metrics: Mean Integrated Squared Error (MISE) for global accuracy, Tail Difference (TD) for tail fidelity beyond the 5\% quantile, Mean Squared Error (MSE) for pointwise error, and Maximum Mean Discrepancy (MMD) for distributional alignment.

\subsection{Experimental Results Analysis}

\label{sec:analysis} 

\noindent\textbf{Simulation Experiment Analysis.} As shown in Table~\ref{tab:simulation_combined}, single linear or convolutional architectures are inadequate for modeling stochastic systems, as they fail to capture nonlinear diffusion dynamics. As shown in Figure~\ref{fig:changeoveertime}, attention-based models achieve strong performance in some regimes but lack the inductive bias for stochastic diffusion, leading to instability in more complex systems. GAN-SDEs with CDE-based discriminators further suffer from high gradient variance, causing unstable training and performance collapse. In contrast, HGAN-SDEs deliver both stability and generalization, consistently achieving state-of-the-art results across all synthetic benchmarks.

\begin{figure}[!htbp]
    \centering
    \resizebox{1\columnwidth}{!}{
        \includegraphics{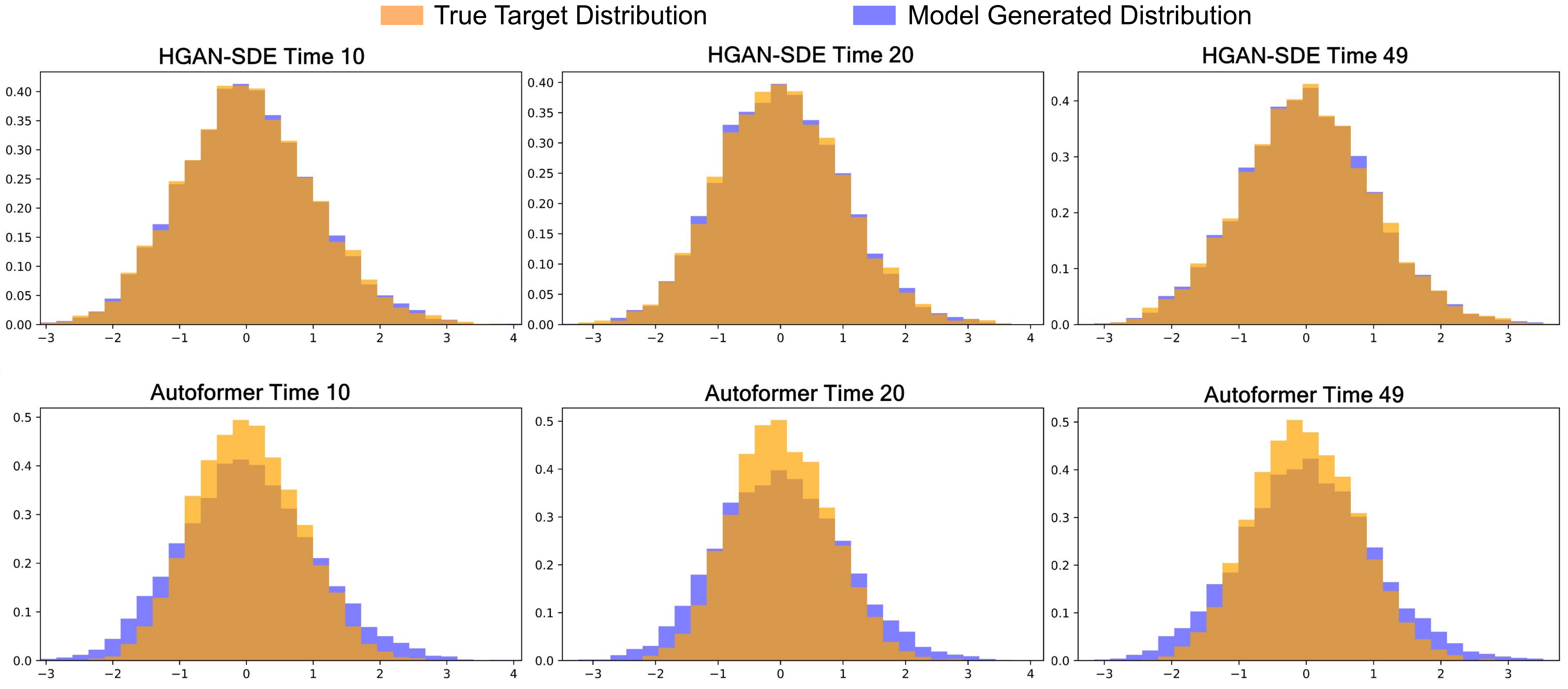}
    }
    \vspace{-0.6em}
    \caption{Comparison between HGAN-SDEs and Autoformer on the CIR dataset. 
The figure shows how predicted sequence distributions evolve across time points, 
revealing that HGAN-SDEs maintain smoother and more consistent dynamics, while Autoformer tends to generate more dispersed outputs.}
    \label{fig:changeoveertime} 
    \vspace{-0.5em}
\end{figure}

\begin{table*}[htbp]
\centering
\renewcommand{\arraystretch}{1.1}
\resizebox{2\columnwidth}{!}{
\begin{tabular}{l l cccc cccc cccc}
\toprule
\multirow{2}{*}{\textbf{Category}} & \multirow{2}{*}{\textbf{Model}} & 
\multicolumn{4}{c}{\textbf{Stock-AAL}} & 
\multicolumn{4}{c}{\textbf{Stock-ADBE}} & 
\multicolumn{4}{c}{\textbf{Traffic}} \\
\cmidrule(lr){3-6} \cmidrule(lr){7-10} \cmidrule(lr){11-14}
 & & \textbf{MISE} & \textbf{TD} & \textbf{MSE} & \textbf{MMD} & 
 \textbf{MISE} & \textbf{TD} & \textbf{MSE} & \textbf{MMD} & 
 \textbf{MISE} & \textbf{TD} & \textbf{MSE} & \textbf{MMD} \\
\midrule
\textbf{Linear Model} & DLinear & 158.75 & 10.00 & 1.59 & 1.00 & 158.61 & 8.80 & 1.59 & 0.88 & 8.99 & 3.60 & \cellcolor{orange!30}0.09 & 0.36 \\
\midrule
\multirow{2}{*}{\textbf{Convolution-based Models}} & MICN & 84.81 & 10.00 & 0.85 & 1.00 & 193.65 & 10.00 & 1.94 & 1.00 & 9.66 & 3.73 & 0.10 & 0.37 \\
& TimesNet & 93.00 & 10.00 & 0.93 & 1.00 & 89.89 & 3.99 & 0.90 & 0.40 & 15.52 & 4.26 & 0.16 & 0.43 \\
\midrule
\multirow{2}{*}{\textbf{Attention-based Models}} & Transformer & 59.80 & \cellcolor{orange!70}\textbf{0.16} & 0.60 & \cellcolor{orange!70}\textbf{0.02} & 485.93 & \cellcolor{orange!70}0.37 & 4.86 & \cellcolor{orange!70}\textbf{0.04} & 1885.06 & 10.01 & 18.85 & 1.00 \\
& Autoformer & 112.45 & 9.98 & 1.12 & 1.00 & \cellcolor{orange!30}8.40 & 5.21 & \cellcolor{orange!30}0.08 & 0.52 & 88.31 & 3.49 & 0.88 & 0.35 \\
\midrule
\textbf{RNN Model} & SegRNN & 156.97 & 10.00 & 1.57 & 1.00 & 53.65 & \cellcolor{orange!30}1.81 & 0.54 & 0.18 & 28.23 & \cellcolor{orange!30}2.58 & 0.28 & \cellcolor{orange!30}0.26 \\
\midrule
\textbf{State-space Model} & Mamba & 73.31 & 7.82 & 0.73 & 0.78 & 45.35 & 6.26 & 0.45 & 0.63 & \cellcolor{orange!70}\textbf{0.00} & 3.59 & \cellcolor{orange!70}\textbf{0.00} & 0.36 \\
\midrule
\multirow{3}{*}{\textbf{Differential Equation-based Models}} & Latent-SDEs & \cellcolor{orange!30}35.52 & 10.00 & \cellcolor{orange!30}0.36 & 1.00 & 34.67 & 10.00 & 0.35 & 1.00 & 32.17 & 9.50 & 0.32 & 0.95 \\
& GAN-SDEs & 38.00 & 10.01 & 0.38 & 1.00 & 37.89 & 10.01 & 0.38 & 1.00 & 35.79 & 9.74 & 0.36 & 0.97 \\
& \textbf{HGAN-SDEs} & \cellcolor{orange!70}\textbf{2.89} & \cellcolor{orange!30}0.36 & \cellcolor{orange!70}\textbf{0.03} & \cellcolor{orange!30}0.04 & \cellcolor{orange!70}\textbf{8.05} & 3.59 & \cellcolor{orange!70}\textbf{0.08} & \cellcolor{orange!30}\textbf{0.36} & \cellcolor{orange!30}0.22 & \cellcolor{orange!70}\textbf{0.17} & \cellcolor{orange!70}\textbf{0.00} & \cellcolor{orange!70}\textbf{0.02} \\
\bottomrule
\end{tabular}
}
\vspace{-0.6em}
\caption{Performance comparison of models on Stock-AAL, Stock-ADBE, and Traffic datasets. Lower values are better for all metrics (MISE, TD, MSE, MMD). 
\colorbox{orange!70}{Darker orange} marks the best result, 
\colorbox{orange!30}{lighter orange} marks the second-best. 
All MSE values are scaled by $10^{-3}$.}
\label{tab:final_table_stock_traffic}
\end{table*}

\noindent\textbf{Real-World Experiment Analysis.} Our results (Table~\ref{tab:final_table_stock_traffic}) show that model performance depends on data characteristics. Linear and convolutional models handle regular patterns such as traffic flows but fail in non-stationary systems like finance, where volatility and high-frequency noise dominate. In contrast, HGAN-SDEs remain stable and achieve the best results across all benchmarks by combining a Neural SDE generator for deterministic and stochastic dynamics with a Hermite-based discriminator for trajectory discrimination. This design enables robust adaptation to both regular and highly stochastic systems.

\subsection{More Explorations}
\label{sec:exploration}

\noindent\textbf{The Number of Neural Hermite Function Terms.}
Increasing the number of neural Hermite terms improves transition density approximation, aiding accurate modeling of stochastic processes. However, each additional term introduces extra networks and parameters, complicating training and risking longer runtimes, convergence issues, and instability. As shown in Figure~\ref{fig:mise_vs_hermite_terms}, three to four terms suffice, highlighting the need to balance approximation accuracy with architectural simplicity and training stability.

\begin{figure}[!htbp]
    \centering
    \resizebox{0.8\columnwidth}{!}{
        \includegraphics{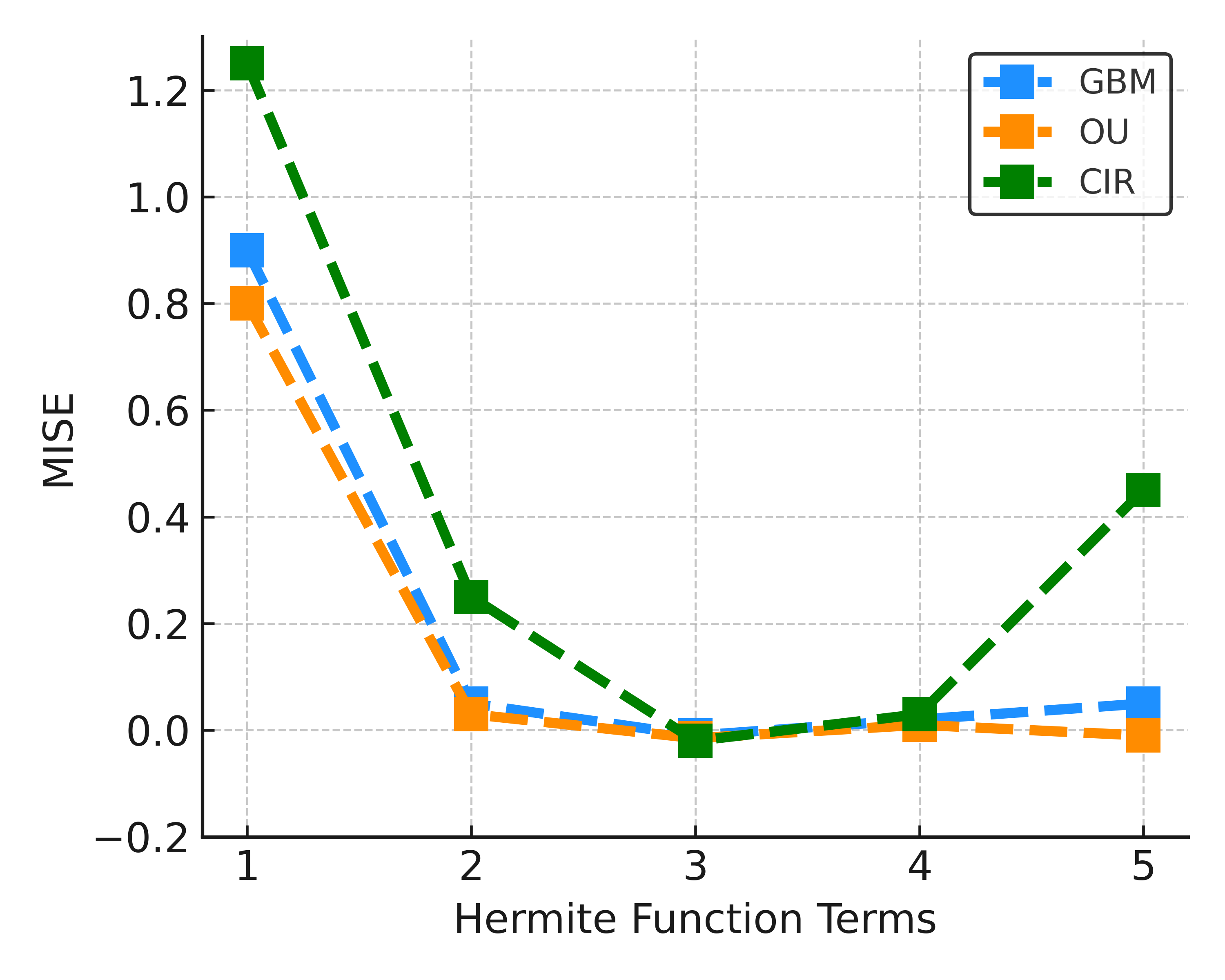}
    }
    \vspace{-0.5em}
    \caption{The figure shows the changes in MISE with the increase of neural Hermite terms in HGAN-SDEs. We present the patterns on three datasets, with the parameter configurations consistent with the previous experiments.}
    \label{fig:mise_vs_hermite_terms} 
\end{figure}

\noindent\textbf{Computational Efficiency.} As shown in Table~\ref{tab:gan_sde_comparison}, HGAN-SDEs achieves the fastest convergence with the lowest resource cost, requiring only 1.2 GPU-hours on average compared to 10.0 for CDE and 6.5 for LSTM. This highlights its practicality in resource-constrained or large-scale settings. To ensure fair comparison, we kept parameter counts as consistent as possible across models. All generators share the same Neural SDE architecture, with drift and diffusion functions parameterized by 2–3 layer MLPs. For HGAN-SDEs, we use a 3-layer MLP with hidden size 128 to parameterize the Hermite basis. Other discriminators (LSTM, MLP, CDE) are implemented with comparable parameter scales.
\begin{table}[!htbp]
\centering
\label{tab:gan_sde_comparison}
\resizebox{0.95\columnwidth}{!}{
\begin{tabular}{lccccc}
\toprule
\multirow{2}{*}{Model} & \multicolumn{4}{c}{Polynomial Drift} & \multirow{2}{*}{GPU-h} \\
\cmidrule(lr){2-5}
 & MISE & TD & MSE & MMD & \\
\midrule
LSTM Discriminator & 1.12 & 0.43 & 0.03 & 0.04 & 6.5 \\
MLP Discriminator  & 0.95 & 0.37 & 0.03 & 0.04 & 4.8 \\
GAN-SDEs (CDE)      & 0.11 & 0.27 & 0.02 & 0.02 & 10.0 \\
HGAN-SDEs (Hermite)& 0.04 & 0.11 & 0.01 & 0.01 & 1.2\\
\bottomrule
\end{tabular}
}
\vspace{-0.5em}
\caption{Comparison of performance and training cost on the Polynomial Drift benchmark. All models share the same Neural SDE generator.}
\end{table}

\vspace{-1mm}
\section{Conclusion}
We propose HGAN-SDEs, a GAN framework combining Neural SDE generators with Hermite-function discriminators. Leveraging orthogonal Hermite bases reduces complexity, stabilizes training, and preserves discriminative power. Across synthetic and real-world experiments, HGAN-SDEs surpass linear, convolutional, transformer, and SDE-based baselines in accuracy and efficiency, converging faster and more robustly with only a few Hermite terms.

\bibliographystyle{elsarticle-num}
\bibliography{main_pr}

@String(ICLR= {Proc. ICLR})

@String(ACL = {Proc. ACL})

@String(NIPS  = {Proc. NeurIPS})

@String(ICML  = {Proc. ICML})

@inproceedings{chen2018neural,
  author = {Chen, Ricky T. Q. and Rubanova, Yulia and Bettencourt, Jesse and Duvenaud, David K},
  booktitle = NIPS,
  pages = {},
 title = {Neural Ordinary Differential Equations},
 volume = {31},
 year = {2018}
}

@inproceedings{li2020scalable,
  title={Scalable gradients and variational inference for stochastic differential equations},
  author={Li, Xuechen and Wong, Ting-Kam Leonard and Chen, Ricky TQ and Duvenaud, David K},
  booktitle = {Symp. Adv. Approx. Bayesian Inf.},
  pages={1--28},
  year={2020}
}

@article{rackauckas2020universal,
  title={Universal differential equations for scientific machine learning},
  author={Rackauckas, Christopher and Ma, Yingbo and Martensen, Julius and Warner, Collin and Zubov, Kirill and Supekar, Rohit and Skinner, Dominic and Ramadhan, Ali and Edelman, Alan},
  journal={arXiv preprint arXiv:2001.04385},
  year={2020}
}

@article{liu2019neural,
  title={Neural sde: Stabilizing neural ode networks with stochastic noise},
  author={Liu, Xuanqing and Xiao, Tesi and Si, Si and Cao, Qin and Kumar, Sanjiv and Hsieh, Cho-Jui},
  journal={arXiv preprint arXiv:1906.02355},
  year={2019}
}

@article{cuchiero2020generative,
  title={A generative adversarial network approach to calibration of local stochastic volatility models},
  author={Cuchiero, Christa and Khosrawi, Wahid and Teichmann, Josef},
  journal={Risks},
  volume={8},
  number={4},
  pages={101},
  year={2020}
}

@article{song2020score,
  title={Score-based generative modeling through stochastic differential equations},
  author={Song, Yang and Sohl-Dickstein, Jascha and Kingma, Diederik P and Kumar, Abhishek and Ermon, Stefano and Poole, Ben},
  journal={arXiv preprint arXiv:2011.13456},
  year={2020}
}

@article{gierjatowicz2020robust,
  title={Robust pricing and hedging via neural sdes},
  author={Gierjatowicz, Patryk and Sabate-Vidales, Marc and {\v{S}}i{\v{s}}ka, David and Szpruch, Lukasz and {\v{Z}}uri{\v{c}}, {\v{Z}}an},
  journal={arXiv preprint arXiv:2007.04154},
  year={2020}
}

@inproceedings{issa2024non,
  title={Non-adversarial training of Neural SDEs with signature kernel scores},
  author={Issa, Zacharia and Horvath, Blanka and Lemercier, Maud and Salvi, Cristopher},
  booktitle=NIPS,
  volume={36},
  year={2024}
}

@inproceedings{xu2022infinitely,
  title={Infinitely deep bayesian neural networks with stochastic differential equations},
  author={Xu, Winnie and Chen, Ricky TQ and Li, Xuechen and Duvenaud, David},
  booktitle={International Conference on Artificial Intelligence and Statistics},
  pages={721--738},
  year={2022},
}

@inproceedings{zeng2024latent,
  title={Latent sdes on homogeneous spaces},
  author={Zeng, Sebastian and Graf, Florian and Kwitt, Roland},
  booktitle=NIPS,
  volume={36},
  year={2024}
}

@inproceedings{course2024amortized,
  title={Amortized reparametrization: efficient and scalable variational inference for latent SDEs},
  author={Course, Kevin and Nair, Prasanth},
  booktitle=NIPS,
  volume={36},
  year={2024}
}

@inproceedings{kidger2021neural,
  title={Neural sdes as infinite-dimensional gans},
  author={Kidger, Patrick and Foster, James and Li, Xuechen and Lyons, Terry J},
  booktitle=ICML,
  pages={5453--5463},
  year={2021}
}

@article{ait1999transition,
  title={Transition densities for interest rate and other nonlinear diffusions},
  author={A{\"\i}t-Sahalia, Yacine},
  journal={The journal of finance},
  volume={54},
  number={4},
  pages={1361--1395},
  year={1999}
}

@article{ito1944,
  title={Stochastic Integral},
  author={It{\^o}, Kiyosi},
  journal={Proc. Imp. Acad.},
  volume={20},
  number={8},
  pages={519--524},
  year={1944}
}

@article{stratonovich1964new,
  title={A new form of representing stochastic integrals and equations},
  author={Stratonovich, RL},
  journal={Vestnik Moskov. Univ. Ser. I Mat. Meh},
  volume={1},
  pages={3--12},
  year={1964}
}

@article{tzen2019neural,
  title={Neural stochastic differential equations: Deep latent gaussian models in the diffusion limit},
  author={Tzen, Belinda and Raginsky, Maxim},
  journal={arXiv preprint arXiv:1905.09883},
  year={2019}
}

@inproceedings{DBLP:conf/iclr/Zhang0ZX018,
  author       = {Pengchuan Zhang and
                  Qiang Liu and
                  Dengyong Zhou and
                  Tao Xu and
                  Xiaodong He},
  title        = {On the Discrimination-Generalization Tradeoff in GANs},
  booktitle    = ICLR,
  year         = {2018}
}

@article{stekloff1916théorème,
  title={Th{\'e}or{\`e}me de fermeture pour les polynomes de Tch{\'e}bychef--Laguerre},
  author={Stekloff, Wladimir},
  journal   = {Bulletin de l'Acad{\'e}mie Imp{\'e}riale des Sciences de St.-P{\'e}tersbourg. VI s{\'e}rie},
  year      = {1916},
  volume    = {10},
  number    = {8},
  pages     = {633--642},
}

@inproceedings{zeng2022dlinear,
  title={Are Transformers Effective for Time Series Forecasting?},
  author={Zeng, Ailing and Zhang, Yifan and Xu, Yuxuan and Zhou, Shiji and Zhang, Yujing and Xu, Weinan and Xu, Wei},
  booktitle={AAAI},
  year={2022}
}

@inproceedings{wang2023micn,
  title={MICN: Multi-scale Local and Global Context Modeling for Long-term Series Forecasting},
  author={Wang, Haoyi and Wang, Yuxuan and Zhang, Yifan and Xu, Yuxuan and Zhou, Shiji and Xu, Weinan},
  booktitle=ICLR,
  year={2023}
}

@inproceedings{wu2023timesnet,
  title={TimesNet: Temporal 2D-Variation Modeling for General Time Series Analysis},
  author={Wu, Haixu and Hu, Tengge and Liu, Yong and Zhou, Hang and Wang, Jianmin and Long, Mingsheng},
  booktitle=ICLR,
  year={2023}
}

@inproceedings{vaswani2017attention,
  title={Attention is All You Need},
  author={Vaswani, Ashish and Shazeer, Noam and Parmar, Niki and Uszkoreit, Jakob and Jones, Llion and Gomez, Aidan N and Kaiser, {\L}ukasz and Polosukhin, Illia},
  booktitle=NIPS,
  volume={30},
  year={2017}
}

@inproceedings{zhou2021informer,
  title={Informer: Beyond Efficient Transformer for Long Sequence Time-Series Forecasting},
  author={Zhou, Haoyi and Zhang, Shanghang and Peng, Jieqi and Zhang, Shuai and Li, Jianxin and Xiong, Hui and Zhang, Wancai},
  booktitle={AAAI},
  year={2021}
}

@inproceedings{wu2021autoformer,
  title={Autoformer: Decomposition Transformers with Auto-Correlation for Long-Term Series Forecasting},
  author={Wu, Haixu and Xu, Yuxuan and Wang, Yuxuan and Long, Mingsheng},
  booktitle=NIPS,
  year={2021}
}

@inproceedings{wang2023segrnn,
  title={SegRNN: Segmental Recurrent Neural Network for Time Series Forecasting},
  author={Wang, Haoyi and Zhang, Yifan and Xu, Yuxuan and Zhou, Shiji and Xu, Weinan},
  booktitle=ICLR,
  year={2023}
}

@inproceedings{gu2023mamba,
  title={Mamba: Linear Time Sequence Modeling with Selective State Spaces},
  author={Gu, Albert and Dao, Tri and Ermon, Stefano and Recht, Benjamin and Rudra, Atri},
  booktitle=NIPS,
  year={2023}
}

@inproceedings{rubanova2019latent,
  title={Latent Ordinary Differential Equations for Irregularly-Sampled Time Series},
  author={Rubanova, Yulia and Chen, Ricky TQ and Duvenaud, David},
  booktitle=NIPS,
  year={2019}
}

@article{eijndhoven1990new,
  title = {New Orthogonality Relations for the Hermite Polynomials and Related Hilbert Spaces},
  author = {van Eijndhoven, S. J. L. and Meyers, J. L. H.},
  journal = {Journal of Mathematical Analysis and Applications},
  volume = {146},
  number = {1},
  pages = {89--98},
  year = {1990}
}

@book{abramowitz1972handbook,
  title={Handbook of mathematical functions with formulas, graphs, and mathematical tables},
  author={Abramowitz, Milton and Stegun, Irene A},
  volume={55},
  year={1948},
  publisher={US Government printing office}
}

@inproceedings{xu2024plutus,
author = {Xu, Yuanjian and Hao, Jianing and Liu, Anxian and Li, Zhenzhuo and Meng, Shichang and Yuan, Shuai and Zhang, Guang},
title = {LENS: Large Pre-trained Transformer for Exploring Financial Time Series Regularities},
year = {2025},
booktitle = {Proc. ICAIF},
pages = {771–778},
numpages = {8}
}

@article{ph1,
  title={Stochastic thermodynamics: principles and perspectives},
  author={Seifert, Udo},
  journal={The European Physical Journal B},
  volume={64},
  number={3},
  pages={423--431},
  year={2008}
}

@article{bio1,
  title={Stochastic gene expression in a single cell},
  author={Elowitz, Michael B and Levine, Arnold J and Siggia, Eric D and Swain, Peter S},
  journal={Science},
  volume={297},
  number={5584},
  pages={1183--1186},
  year={2002}
}

@inproceedings{xu2025finripple,
    title = "{F}in{R}ipple: Aligning Large Language Models with Financial Market for Event Ripple Effect Awareness",
    author = "Xu, Yuanjian and Hao, Jianing  and Tang, Kunsheng and Chen, Jingnan  and Liu, Anxian and Liu, Peng  and Zhang, Guang",
    booktitle = {Proc. ACL Findings},
    year = "2025",
    pages = "9377--9398",
}

\end{document}